\documentclass{article}

\usepackage{preprint}
\usepackage[T1]{fontenc}
\usepackage{wrapfig} 
\usepackage{enumitem}
\usepackage{amsmath,amssymb,amsthm}
\usepackage{multirow}
\usepackage{algorithm}
\usepackage{algpseudocode}
\usepackage{graphicx}
\usepackage{booktabs}
\usepackage{microtype}

\usepackage{xcolor}
\usepackage{tikz}
\usetikzlibrary{arrows.meta}

\definecolor{burntumber}{rgb}{0.54, 0.2, 0.14}
\definecolor{cobalt}{rgb}{0.0, 0.28, 0.67}

\usepackage{url}
\usepackage{hyperref}

\definecolor{citeblue}{HTML}{4A90D9}

\hypersetup{
    colorlinks=true,
    citecolor=citeblue,
    linkcolor=citeblue,
    urlcolor=citeblue,
    pdftitle={ThinQuant: Fast and Scalable Rotation Learning for Weight and Activation Quantization of Large Language Models},
    pdfauthor={Mehdi Makni, Ryan Lucas, Rahul Mazumder}
}

\newcommand{\R}{\mathbb R}
\newcommand{\SO}{\operatorname{SO}}
\newcommand{\St}{\operatorname{St}}
\newcommand{\conv}{\operatorname{conv}}
\newcommand{\rank}{\operatorname{rank}}
\newcommand{\argmax}{\operatorname*{arg\,max}}

\newcommand{\cA}{\mathcal A}

\newcommand{\cN}{\mathcal N}

\newtheorem{theorem}{Theorem}

\makeatletter
\g@addto@macro\normalsize{%
  \setlength\abovedisplayskip{3pt plus 1pt minus 1pt}%
  \setlength\belowdisplayskip{3pt plus 1pt minus 1pt}%
  \setlength\abovedisplayshortskip{2pt plus 1pt}%
  \setlength\belowdisplayshortskip{3pt plus 1pt minus 1pt}%
}
\makeatother
\setlist[itemize]{topsep=2pt,itemsep=1pt,parsep=0pt,partopsep=0pt}

\usepackage[table]{xcolor}

\definecolor{darkgreen}{HTML}{1B6B45}

\newif\ifdraftnotes
\draftnotesfalse

\definecolor{hull}{HTML}{245D73}
\definecolor{witness}{HTML}{B65324}
\definecolor{direction}{HTML}{624787}
\definecolor{cloud}{HTML}{9BABB2}
\definecolor{ink}{HTML}{23313A}
\definecolor{paperblue}{HTML}{F1F6F8}

\color{ink}

\newcommand{\HullPath}{%
  (3,0)--(2,2)--(0,3)--(-3,0)--(-2,-2)--(0,-3)--cycle%
}

\DeclareMathOperator{\Ext}{ext}

\newcommand{\InteriorCoords}{%
  .25/.35,
  .75/.2,
  1.15/.55,
  1.45/.1,
  1.7/.65,
  .3/1.05,
  .85/1.3,
  1.35/1.35,
  .1/1.85,
  .6/2.1,
  -.55/.35,
  -1.1/.25,
  -.2/1.35,
  -.7/.9,
  .25/-.75,
  .55/-.95,
  1.2/-.45,
  1.85/-.3%
}

\definecolor{ThinBlueTop}{HTML}{A9DDFF}
\definecolor{ThinBlueBottom}{HTML}{0047D7}

\definecolor{ThinEmerald}{HTML}{00A878}
\definecolor{ThinBracket}{HTML}{30343A}

\newlength{\ThinQuantIHeight}

\DeclareRobustCommand{\ThinI}{%
  \begingroup
  \settoheight{\ThinQuantIHeight}{n}%
  \tikz[
    baseline=0pt,
    x=.08em,
    y=1.03\ThinQuantIHeight
  ]{%
    \fill[ThinEmerald]
      (0,0) rectangle (1,1);

    \draw[
      ThinBracket,
      line width=.28pt
    ]
      (-.30,0) -- (-.30,1)
      (-.30,1) -- (-.08,1)
      (-.30,0) -- (-.08,0);

    \draw[
      ThinBracket,
      line width=.28pt
    ]
      (1.30,0) -- (1.30,1)
      (1.08,1) -- (1.30,1)
      (1.08,0) -- (1.30,0);
  }%
  \endgroup
}

\DeclareRobustCommand{\ThinQuantLogo}{%
  {\sffamily\bfseries
    Th%
    \kern.035em%
    \ThinI%
    \kern.035em%
    nQuant%
  }%
}

\title{%
  \texorpdfstring{%
    Th\kern.035em\ThinI\kern.035em nQuant: Scalable Rotation Learning\\
    for Weight and Activation Quantization of LLMs%
  }{ThinQuant: Scalable Rotation Learning for Weight and Activation Quantization of LLMs}%
}

\author{%
\makebox[\textwidth][c]{%
\begin{tabular}{c}
Mehdi Makni\thanks{Equal contribution.},\;
Ryan Lucas\footnotemark[1],\;
Rahul Mazumder \\[0.35em]
Operations Research Center \\
Massachusetts Institute of Technology
\end{tabular}%
}%
}

\begin{document}
\maketitle

\begin{abstract}
Learned rotations play an important role in enabling low-bit weight and activation quantization of
large language models by smoothing outliers in the activation distribution. State-of-the-art approaches include gradient-based procedures such as SpinQuant and computationally friendlier gradient-free
approaches such as DartQuant, but both remain hard to scale to the largest architectures.
To address the computational bottlenecks in gradient-free rotation learning, we introduce two ideas for efficiency, (i) a data selection procedure which reduces the required number of calibration data points, and (ii) an exact reduction of the associated optimization on this reduced calibration set. Our data selection procedure exploits the geometric structure of the convex hull of the activations. Using this idea, we show that a carefully selected calibration set with several orders of magnitude fewer activations than state-of-the-art rotation-based methods can match their performance in low-bit quantization settings. Under this extreme data efficiency, the selected activations span an
$r$-dimensional subspace with $r<d$, making optimization over a $d\times d$ rotation equivalent to optimizing a
$d\times r$ matrix on the Stiefel manifold. We solve this reduced problem using an efficient ADMM algorithm that iteratively
employs thin matrix updates at every step, hence the name ThinQuant.
For Llama-3-70B with W4A4KV4 quantization, ThinQuant completes the entire rotation calibration in under 12 minutes and achieves a WikiText-2 perplexity of 5.63, compared with 7.55 for DartQuant, which requires 111 minutes. Unlike SpinQuant and DartQuant, ThinQuant also scales to Llama-3.1-405B on a single H200 GPU, completing rotation calibration in just over 2 hours and achieving WikiText-2 perplexity of 2.97 at W4A4, compared with 3.48 for GPTAQ+QuaRoT.
\end{abstract}

\section{Introduction}

State-of-the-art large language models (LLMs) contain billions of parameters and process long token sequences, leading to large memory footprints and slow inference time.  As a result, post-training quantization (PTQ), where a pretrained checkpoint is compressed without retraining from scratch, has become a central tool for making LLM inference more efficient \citep{dettmers2022llmint8}. Weight-only PTQ (e.g., INT8 or INT4 weights with higher-precision activations) substantially reduces memory and storage, and methods such as GPTQ \citep{frantar2023gptq}, GPTAQ \citep{li2025gptaqefficientfinetuningfreequantization}, and ADMM-Q \citep{lucas2026admmqimprovedhessianbasedweight} achieve near-lossless accuracy even at 4-bit weight precision. However, weight-only schemes leave the dominant computation—matrix--vector products during inference—executed in higher precision (e.g. FP16 or BF16). Joint weight \textit{and} activation quantization gives greater end-to-end speedups, since the core matrix multiplications can be executed by INT8 or INT4 tensor cores, which offer higher peak throughput than FP16 on modern GPUs.

Joint weight and activation quantization is substantially harder than weight-only quantization. Weights are fixed after training and can be compressed offline using expensive optimization procedures. The resulting quantized tensors are then stored and reused at every inference step. Activations, by contrast, arrive at runtime, are different for every input, and must be quantized on-the-fly with a cheap deterministic rule, typically round-to-nearest (RTN) on a uniform grid. Moreover, it has been observed that for activations a small subset of input channels, often fewer than \(1\%\) of dimensions, carry values that are orders of magnitude larger than the remaining channels \citep{dettmers2022llmint8,xiao2023smoothquant}. 

Rotations address this by changing the coordinate system of the network before quantization which does not change the full precision outputs, but makes the activations substantially easier to quantize. For an orthogonal transformation \(R\in\mathbb{R}^{d\times d}\), a linear map can be equivalently written as $Wx=(WR^{\mathsf T})(Rx)$,
so that activations are represented in the rotated basis while the inverse transformation is absorbed into the surrounding weights. Modern LLM quantization methods exploit this invariance at both \emph{global} and \emph{local} scales \citep{liu2025spinquant,shao2025dartquant}. A global change of basis uses a shared rotation across the hidden representation of multiple layers. Local changes of basis instead act within individual layers or modules, for example inside attention or feed-forward blocks, and can therefore use different rotations at different locations in the network. In both cases, the transformations can be absorbed into adjacent linear operators so that the full-precision network remains functionally identical, while its weights and activations can have substantially more favorable distributions for low-bit quantization. Moreover, rotations can be incorporated into quantized inference pipelines to accelerate inference runtime; we refer the reader to \citep{liu2025spinquant}, Section 4.5.

SpinQuant learns rotations by optimizing an end-to-end loss through the
quantized network using a straight-through estimator \citep{liu2025spinquant}. While effective, this requires
repeated forward and backward passes through the \emph{full} network, making optimization
expensive and potentially prone to overfitting to the fine-tuning objective.
DartQuant substantially reduces this cost by replacing end-to-end training with
a calibration objective on the activations
\citep{shao2025dartquant}. However, this still requires optimizing against a very large collection of activations, which leads to large computational cost. In its standard calibration setting, DartQuant uses a sample $\alpha$ of the tokens generated by \(N=128\times 2048\approx 2.6\times 10^5\) samples from each layer. Across \(L\) transformer blocks, the rotation is therefore calibrated using \(\alpha NL\) activation vectors, where \(\alpha\) is the sampling ratio, set to $10\%$ by default. For Llama-3-70B, \(L=80\), this is on the order of millions of
activation vectors. ThinQuant, using geometry-aware activation selection, takes this to extreme efficiency: using only a few hundred activation vectors from the entire network, ThinQuant gives dramatic savings in disk storage ($\approx 2000\times$) and runtime ($\approx10\times$) for Llama-3-70B with performance comparable to more expensive state-of-the-art methods (e.g. SpinQuant, DartQuant), see Figure \ref{fig:pareto} and Table \ref{tab:peak-disk}.

\begin{wrapfigure}{r}{0.5\textwidth}
\vspace{-6mm}
\includegraphics[width=\linewidth]{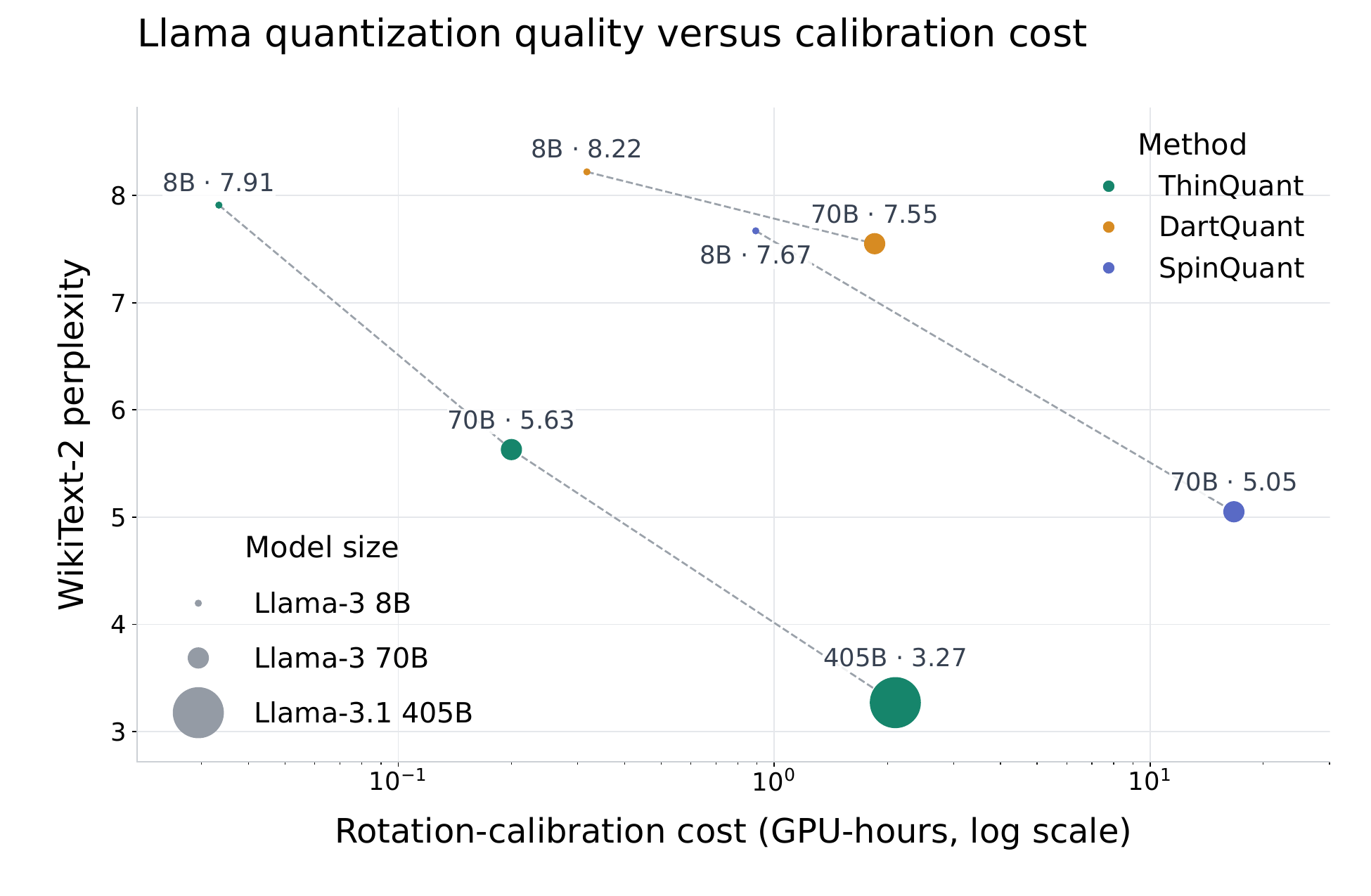}
\vspace{-8mm}
\caption{\small Comparison of Wikitext-2 perplexity versus calibration cost for different rotation-based methods on LLaMA-3 series models at W4A4KV4.
}
\label{fig:pareto}
\end{wrapfigure}

Using our smart activation selection procedure, ThinQuant can operate with a very small calibration dataset, mitigating the shortcomings of existing gradient-free  methods such as DartQuant. We briefly provide some intuition for our selection procedure (see Section \ref{sec:selection} for details). We mitigate the quantization error caused by large outliers in the activations  space by controlling the $\ell_\infty$ norm of the activations after rotation, similar to prior work such as DartQuant and QuaRot \citep{ashkboos2024quarot}. Our key observation is that these
worst-case magnitudes are determined by the \textit{convex hull}
of the activations, rather than by the full activation distribution.
For a calibration set $\cA=\{x_1,\ldots,x_N\}\subset\R^d$, the extreme points of the symmetric convex hull
$K=\operatorname{conv}(\mathcal A\cup(-\mathcal A))$ captures the
largest absolute projection in every direction. Its extreme points therefore suffice
to determine the worst-case objective, while
the interior activations can be discarded without changing it.
Our data selection procedure thus selects geometrically
prominent extreme points in the activations across many possible bases, rather than
only those that are outliers in the original coordinates. In practice,
this geometric selection lets us replace hundreds of thousands
of calibration tokens per layer with a tiny subset of activations.

Given this reduced number of data points, we show that loss functions commonly used to control quantization error (e.g. $\ell_{\infty}$, kurtosis) admit an exact reduction to a smaller optimization problem. Generally, the budget $B$ we use for activation vectors (order 100) is much smaller than the hidden dimension $d$ (order thousands). Let \(U\in\mathbb{R}^{d\times B}\) collect the selected activation
vectors, and let \(r=\operatorname{rank}(U)\leq B\). Writing a thin
factorization \(U=QS\), the columns of
\(Q\in\mathbb{R}^{d\times r}\) are orthonormal, so
\(Q\in\mathrm{St}(d,r)
=\{Y\in\mathbb{R}^{d\times r}:Y^{\mathsf T}Y=I_r\}\).
Since the calibration objective depends on the full rotation
\(R\in\mathbb{R}^{d\times d}\) only through \(RU=(RQ)S\), it is determined
entirely by the \(d\times r\) matrix \(RQ\). We show that optimization over
the full orthogonal group is therefore exactly equivalent to optimization
over the much smaller Stiefel manifold \(\mathrm{St}(d,r)\). Hence ThinQuant, with a few hundred selected activations, replaces a dense \(d\times d\) rotation by a
\(d\times r\) variable with \(r \ll d \), reducing the effective optimization
dimension from quadratic to linear in \(d\), with no additional approximation
beyond the initial activation selection. We summarize our \textbf{contributions} as follows:
\vspace{-2mm}
\begin{itemize}[leftmargin=1em,topsep=1pt,itemsep=1pt]
    \item \textbf{Reducing the calibration data.}
    We develop a geometric activation-selection procedure to select activations that are frequently exposed as extreme
    along random Gaussian directions. This reduces the calibration set required for rotation learning by several
    orders of magnitude, lowering activation storage and the cost of objective evaluations. We support this selection approach with a
    dimension-free bound on the expected discrepancy between the full and reduced
    calibration objectives under Haar-random rotations
    (Theorem~\ref{thm:haar-selection}).

    \item \textbf{Reducing the optimization dimension.}
    We show that this reduced calibration set leads to an exact reduction
    of the rotation-learning problem. For selected activations spanning
    an $r$-dimensional subspace with $r<d$, optimizing a $d\times d$
    rotation is equivalent to optimizing a $d\times r$ matrix on the
    Stiefel manifold, with no additional approximation beyond activation
    selection (Theorem~\ref{thm:reduction}). We exploit this equivalence
    through an ADMM algorithm with thin matrix factorizations, reducing optimization
    memory and per-iteration cost without requiring forward or backward
    passes through the model during optimization.

    \item \textbf{Efficient rotation learning at scale.}
    Combining these reductions, ThinQuant achieves highly competitive
    perplexity and downstream accuracy on Llama-2 and Llama-3 under
    W4A4KV4 quantization at substantially lower calibration cost.
    On Llama-3-70B, ThinQuant reduces total rotation-calibration time
    from $111$ to approximately $12$ minutes relative to DartQuant,
    while improving WikiText-2 perplexity from $7.55$ to $5.63$
    (Table~\ref{tab:results}). Moreover, to demonstrate the extreme scalability of our method, we employ ThinQuant on Llama-3.1-405B. It runs in just over 2 hours on a single H200 GPU, and achieves WikiText-2 perplexity at W4A4 (2.97 vs 3.48 of GPTAQ+QuaRoT), where other optimization-based rotation learning methods fail to scale under similar hardware constraints.
\end{itemize}

\vspace{-5mm}

\section{Related work}

\paragraph{Post-training quantization.}
Weight-only quantization reduces parameter storage and memory traffic
while retaining higher-precision activations. GPTQ \citep{frantar2023gptq} uses approximate second-order information for
weight quantization. GPTAQ \citep{li2025gptaqefficientfinetuningfreequantization} extends GPTQ with a different calibration scheme that uses extra
closed-form weight updates to compensate for discrepancies between
full-precision and quantized layer inputs. AWQ uses activation
statistics to protect salient weight channels \citep{lin2024awq}, and
ADMM-Q uses an ADMM-based method for weight-only quantization
\citep{lucas2026admmqimprovedhessianbasedweight}.
Beyond weight-only methods, joint weight and activation quantization additionally enables
low-precision matrix multiplications, but must accommodate large
outlier values in activation distributions
\citep{dettmers2022llmint8,xiao2023smoothquant}.
SmoothQuant addresses this difficulty through function-preserving
rescaling, transferring quantization difficulty from activations to
weights \citep{xiao2023smoothquant}. OmniQuant further learns clipping
parameters and equivalent transformations through block-wise
reconstruction \citep{shao2024omniquant}.
Such equivalent transformations change the representation presented
to the quantizer without changing the full-precision network output.
Our method also falls within this post-training setting, targeting
joint low-bit weight and activation quantization. Specifically, we
focus on function-preserving rotations, and discuss rotation-based
methods below.
\vspace{-2mm}

\paragraph{Rotation-based quantization.}
Rotation-based methods provide another way to obtain
representations that are more amenable to low-bit quantization.
QuIP uses random orthogonal preprocessing to improve weight and Hessian
incoherence for weight-only quantization \citep{chee2023quip}.
QuaRot applies fixed randomized Hadamard rotations to support weight,
activation, and KV-cache quantization \citep{ashkboos2024quarot}, and is generally the fastest method since it is calibration-free. 
SpinQuant improves substantially on the quality of QuaRot by instead learning rotations by optimizing an end-to-end
quantized-network loss, using Cayley updates to maintain orthogonality
\citep{liu2025spinquant}, but can be slow since it requires full backward passes through the network.
DartQuant reduces this cost by replacing end-to-end optimization with
a direct activation-distribution objective, combining an exponential loss
with a QR decomposition at every step \citep{shao2025dartquant}. Our work builds on these function-preserving rotation frameworks and
focuses on reducing the data and optimization dimensions required
for calibration of the rotation.

\vspace{-2mm}

\section{Method}

Let $X_{\ell}\in\R^{T\times d}$ contain the calibration activations associated with layer $\ell$, with row $t$ written as $x_{\ell,t}^{\mathsf T}$. We adopt the function-preserving rotation placements of prior work \citep{ashkboos2024quarot,liu2025spinquant}, which include a global change of basis $R_1$ and a local change of basis $R_{2,\ell}$ for each layer. The rotation $R_1\in\SO(d)$ defines a single basis shared across all layers. Namely at every layer  $X_\ell$ is represented as $\widetilde X_\ell=X_\ell R_1^{\mathsf T}$ for $\ell=1,\ldots,L$, with the change of basis inverted only at the output as $X_L=\widetilde X_LR_1$. In contrast, each $R_{2,\ell}$ acts only locally within layer $\ell$, with $\widetilde X_\ell=X_\ell R_{2,\ell}^{\mathsf T}$ and $X_\ell=\widetilde X_\ell R_{2,\ell}$. In both cases, the rotations can be absorbed into the surrounding weight matrices, including the final inverse of $R_1$ into the output projection, preserving the full-precision network function without introducing additional online matrix multiplications. Learning these rotations nevertheless leads to large-scale calibration problems. For $R_1$, a single rotation must be calibrated jointly against activations from every layer, so the amount of calibration data grows with the entire network. For $R_{2,\ell}$, the calibration data are local to a single layer, but the rotation itself is still high-dimensional and a separate optimization problem must be solved for every $\ell=1,\ldots,L$. 

ThinQuant addresses these costs with two complementary reductions.
First, for learning both $R_1$ and $R_{2, \ell}$, we select a small set of informative activations
(Section~\ref{sec:selection}), reducing both calibration storage
and the amount of data processed at every optimization step.
Second, we exploit the low-dimensional span of these activations
to reduce the size of the optimization problem
(Section~\ref{sec:reduction}). Instead of updating a dense
$d\times d$ rotation, we optimize a $d\times r$ matrix, where
$r\ll d$ is the rank of the selected activation matrix.
This reformulation is exact for the selected set and replaces
expensive square-matrix orthogonality updates with much cheaper
thin matrix factorizations. Our ADMM solver
(Section~\ref{sec:solver}) operates on this smaller representation to
reduce optimization memory and per-iteration cost.

\vspace{-1mm}
\subsection{Selecting a reduced calibration set}
\label{sec:selection}

\vspace{-1mm}
\paragraph{Rotation calibration.}
We now turn to the rotation calibration problem. We remark that collecting and storing
the full activation matrices can be prohibitively expensive for large
models. For example, DartQuant \citep{shao2025dartquant} explicitly stores a sample of massive calibration
activations, requiring approximately
$960$\,GiB of disk storage for LLaMA-3 70B in their default setting used in our experiments
(Table~\ref{tab:peak-disk} in the appendix). Our goal is therefore to
select a small subset of activations that approximately preserves the
rotation objective \eqref{eq:full-objective}. For $R_1$, we consider activations from all layers, $\cA_1=\bigcup_{\ell=1}^{L}\{x_{\ell,t}^{(1)}:1\leq t\leq T\}$, whereas for $R_{2,\ell}$ we use only the corresponding layer, $\cA_{2,\ell}=\{x_{\ell,t}^{(2)}:1\leq t\leq T\}$. Apart from this choice of calibration set, the method is applied identically to $R_1$ and $R_{2, \ell}$. We therefore write $\cA=\{x_1,\ldots,x_N\}\subset\R^d$ for either collection, $X=[x_1,\ldots,x_N]\in\R^{d\times N}$ for its activation matrix, and $R\in\SO(d)$ for the corresponding rotation. Thus $N=LT$ for $R_1$ and $N=T$ for each $R_{2,\ell}$. For either rotation, a natural worst-case objective including all calibration data is:
\begin{equation}
    \min_{R\in\SO(d)} F(R),
    \qquad
    F(R)
    =
    \|RX\|_\infty
    =
    \max_{x\in\cA}\|Rx\|_\infty,
    \label{eq:full-objective}
\end{equation}
where $\|\cdot\|_\infty$ denotes the largest absolute entry. Geometrically,
the goal is to find an orthonormal basis in which no calibration activation develops a
large coordinate. Uniform quantization is governed by the dynamic
range of the activation, so the largest coordinate directly leads to a coarser
quantization resolution. The $\ell_\infty$ norm therefore
provides a natural worst-case surrogate for suppressing activation
outliers and reducing quantization error
\citep{wei2023outliersuppressionaccuratequantization,
xiao2023smoothquant,chee2023quip,ashkboos2024quarot}. Our goal is to solve \eqref{eq:full-objective} without requiring the full calibration data. Specifically, given an activation budget $B\ll N$, we want a subset $\mathcal I\subseteq[N]$ with $|\mathcal I|=B$ such that
$F_{\mathcal I}(R)=\max_{i\in\mathcal I}\|Rx_i\|_\infty\approx F(R)$.
The subset $\mathcal{I}$ is selected before rotation optimization, so it must
represent this surrogate as the basis $R$ changes, rather than only
in the original coordinate system.  
\vspace{-2mm}
\paragraph{A geometric view of calibration data selection.} We first give some intuition on how the calibration data $\mathcal{A}$ in \eqref{eq:full-objective} might naturally be reduced. First, notice that the worst-case objective in \eqref{eq:full-objective} is determined,
for any fixed rotation $R$, by activations attaining
$\max_{x\in\cA}|r_k^{\mathsf T}x|$ for one of the rows
$r_k^{\mathsf T}$ of $R$. However, since the rows of $R$ change during
optimization (and the optimal $R$ is not known in advance), the maximizer can also change, so
activations that are not maximizers in the original basis may become so after rotation. Thus, rather than retaining
only activations that are large in the original basis,
we want a small subset that retains the  activations that may become maximizers during
rotation optimization.

Classical convex geometry provides a precise language for describing
these extreme directions \citep{ziegler1995lectures}. Define the
symmetric convex hull
$K=\conv\bigl(\cA\cup(-\cA)\bigr)$ and its support function
$h_K(v)=\max_{z\in K}v^{\mathsf T}z
=\max_{x\in\cA}|v^{\mathsf T}x|$, defined for every $v\in\R^d$.
Since $\cA$ is finite and $K$ is a polytope, maximizing the linear
functional $x\mapsto v^{\mathsf T}x$ over $K$ attains its optimum at an
extreme point of $K$ for every $v$ \citep{bertsimas1997introduction}. The candidate maximizers (in $x$) are therefore confined to
$\Ext(K)\subseteq\cA\cup(-\cA)$, where $\Ext(\cdot)$ is the set of
extreme points of the polytope. This holds independently of $v$, so a single finite
set of signed activations suffices to evaluate $h_K$ in every direction.
Writing the rows of a rotation $R$ as
$r_1^{\mathsf T},\ldots,r_d^{\mathsf T}$, we can thus write
\eqref{eq:full-objective} equivalently as:
\begin{equation}
 \min_{R\in\SO(d)} F(R)
  =
  \max_{1\leq k\leq d} h_K(r_k),
  \label{eq:hull}
\end{equation}
The key observation here is that changing $R$
changes the viewing direction, but not the underlying polytope
$K$. Figure~\ref{fig:switch} illustrates this.
 The directions $r_1,r_2,r_3$ expose different activations
    $x_1,x_2,x_3$, respectively, but each support value can only be attained at an
    extreme point of the same convex hull. Thus, the relevant boundary point
    can change as the rotation changes, but activations on the interior of $K$
    can never obtain an extreme point. This explains why the full calibration set can be redundant. An activation which is not an extreme point
cannot become extreme along any direction. Any
subset $\mathcal A_{\mathcal{I}}\subseteq\cA$ satisfying
$\conv\bigl(\mathcal A_{\mathcal{I}}\cup(-\mathcal A_{\mathcal{I}})\bigr)=K$ preserves $F(R)$
for every rotation $R$. This also connects rotation calibration to the literature on geometric coresets and, in particular, $\varepsilon$-kernels, where small subsets are constructed to approximately preserve the width of the convex hull in all directions
\citep{agarwal2004approximating}.

\begin{wrapfigure}[14]{r}{0.4\textwidth}
\vspace{-2\intextsep}
\centering
\begin{tikzpicture}[
x=.55cm,y=.55cm,
line cap=round,
line join=round
]

\definecolor{prettyhull}{HTML}{2D6A73}
\definecolor{prettyfill}{HTML}{EFF6F6}
\definecolor{prettycloud}{HTML}{9AAEB3}
\definecolor{prettyaxis}{HTML}{C2D0D3}
\definecolor{dirone}{HTML}{CC7744}
\definecolor{dirtwo}{HTML}{8064AE}
\definecolor{dirthree}{HTML}{447EB0}

\path[use as bounding box]
  (-3.7,-3.5) rectangle (3.7,4.35);

\fill[prettyfill] \HullPath;

\draw[prettyaxis,line width=.35pt,-{Latex[length=1.3mm]}]
  (-3.35,0)--(3.40,0);
\draw[prettyaxis,line width=.35pt,-{Latex[length=1.3mm]}]
  (0,-3.35)--(0,3.40);

\foreach \xx/\yy in \InteriorCoords {
  \fill[prettycloud,opacity=.70] (\xx,\yy) circle (1.4pt);
  \fill[prettycloud,opacity=.70] (-\xx,-\yy) circle (1.4pt);
}

\draw[prettyhull,line width=1.05pt] \HullPath;

\foreach \ang/\h/\col in {
  0/3/dirone,
  45/2.828427/dirtwo,
  90/3/dirthree%
}{
  \begin{scope}[rotate=\ang]
    \draw[\col!80,line width=.75pt,dash pattern=on 2.3pt off 2pt]
      (\h,-.90)--(\h,.90);
    \draw[\col!40,line width=.65pt,dash pattern=on 2.3pt off 2pt]
      (-\h,-.90)--(-\h,.90);
  \end{scope}
}

\foreach \xx/\yy/\col in {
  -3/0/dirone,
  -2/-2/dirtwo,
  0/-3/dirthree%
}{
  \filldraw[fill=\col!45,draw=white,line width=.7pt]
    (\xx,\yy) circle (2.8pt);
}

\foreach \ang/\col/\lab in {
  0/dirone/r_1,
  45/dirtwo/r_2,
  90/dirthree/r_3%
}{
  \draw[\col,line width=1.25pt,-{Latex[length=2.1mm,width=1.5mm]}]
    (0,0)--(\ang:1.70);
  \node[text=\col,fill=prettyfill,inner sep=1pt,font=\footnotesize]
    at ({1.33*cos(\ang)-.36*sin(\ang)},
        {1.33*sin(\ang)+.36*cos(\ang)}) {$\lab$};
}

\fill[prettyhull] (0,0) circle (1.4pt);

\foreach \xx/\yy/\col in {
  3/0/dirone,
  2/2/dirtwo,
  0/3/dirthree%
}{
  \fill[white] (\xx,\yy) circle (4.2pt);
  \fill[\col] (\xx,\yy) circle (3pt);
}

\node[text=dirone,font=\footnotesize,inner sep=1pt]
  at (3.30,.38) {$x_1$};
\node[text=dirtwo,font=\footnotesize,inner sep=1pt]
  at (2.34,2.34) {$x_2$};
\node[text=dirthree,font=\footnotesize,inner sep=1pt]
  at (-.36,3.28) {$x_3$};

\node[text=prettyhull,font=\footnotesize,anchor=south,inner sep=0pt]
  at (0,3.6)
  {\small $K=\operatorname{conv}\!\bigl(\mathcal A\cup(-\mathcal A)\bigr)$};

\end{tikzpicture}
\vspace{-2mm}
\caption{\small
For any direction $r$, the support value
$h_K(r)=\max_{x\in\cA}|r^{\mathsf T}x|$
is attained at an extreme point of $K$. Hence, as the rotation changes,
only the exposed boundary activations can determine $F(R)$ and interior
activations do not determine the objective.}
\label{fig:switch}
\end{wrapfigure}

\vspace{-3mm}
\paragraph{Selection algorithm.}
Although the relevant calibration information is concentrated on extreme
points, recovering all extreme points is unattractive in our setting. For general polyhedra, deciding if a partial
vertex list omits another vertex is NP-complete
\citep{khachiyan2008generating}. Even for a finite number of points, exact identification of all extreme points would require
solving a sequence of high-dimensional convex-hull membership problems. Similarly, $\varepsilon$-kernels, which approximate the
directional width of a point set simultaneously over all directions,
have worst-case size
$\Theta\!\left(\varepsilon^{-(d-1)/2}\right)$
for preserving directional widths within a relative
$(1-\varepsilon)$ factor \citep{agarwal2004approximating}. With hundreds of thousands of activation vectors in dimensions of several
thousand, these are considerably more expensive than the lightweight
selection procedure we want.

A natural and more lightweight approach is to query the support function $h_K(v)$ from \eqref{eq:hull} along random directions.
This idea has precedent in randomized extreme-point detection. In particular, \cite{damle2017geometric} generate Gaussian linear
functionals, record the points attaining their extrema, and use
selection frequency to identify geometrically prominent extreme points. We apply the same random-projection idea to the symmetric hull $K=\conv(\cA\cup(-\cA))$. We give the procedure in Algorithm \ref{alg:directional-probes}. We compare the results from this procedure with $\ell_\infty$-based selection in
Table~\ref{tab:selection-ablation} in the appendix, where we retain
the $B$ activations with the largest $\ell_\infty$ norm either in
the original basis or after an initial Hadamard rotation.

\begin{algorithm}[tb]
\caption{Activation selection}
\label{alg:directional-probes}
\small
\begin{algorithmic}[1]
\Require Activations $\cA=\{x_1,\ldots,x_N\}$,
         number of random directions $M$, total activation budget $B$
\State $g_1,\ldots,g_M \overset{\mathrm{iid}}{\sim} \cN(0,I_d)$
       \Comment{Gaussian directions}
\State $i_j \gets \argmax_{1\leq i\leq N}
        |g_j^{\mathsf T}x_i|,\quad j=1,\ldots,M$
       \Comment{Extreme point?}
\State $e_i \gets \sum_{j=1}^{M}\mathbf 1\{i_j=i\},
        \quad i=1,\ldots,N$
       \Comment{How often extreme?}
\State $\mathcal I \gets \operatorname{Top}_B(e)$
       \Comment{Global ranking}
\State \Return $\{x_i:i\in\mathcal I\}$
\end{algorithmic}
\end{algorithm}

\paragraph{Interpretation of the algorithm.}
We now give an interpretation of Algorithm~\ref{alg:directional-probes} in terms of the calibration objective. Notice that \eqref{eq:hull} describes the calibration objective under the best possible change of basis. Rather than requiring a small subset to preserve this objective for every possible rotation (which is clearly intractable), consider how well it is preserved on average over
$R\sim\operatorname{Haar}(\SO(d))$, i.e., for a randomly sampled Haar rotation:
\begin{equation}
\mathbb E_{R\sim\operatorname{Haar}(\SO(d))}
\left[F(R)\right]
=
\mathbb E_{R\sim\operatorname{Haar}(\SO(d))}
\left[
\max_{1\leq k\leq d} h_K(r_k)
\right].
\label{eq:expected-hull}
\end{equation}
Algorithm~\ref{alg:directional-probes} can be interpreted as selecting a small subset of the
activations with which to approximate the objective in \eqref{eq:expected-hull}.
For the selected subset $\mathcal{I}\subseteq[N]$, let
$K_{\mathcal{I}}=\operatorname{conv}\{\pm x_i:i\in\mathcal{I}\}$ and write $
F_{\mathcal{I}}(R)
=
\max_{1\leq k\leq d}h_{K_{\mathcal I}}(r_k)$
as the corresponding reduced calibration objective.
Since $K_{\mathcal I}\subseteq K$, we have
$F_{\mathcal I}(R)\leq F(R)$ for every rotation $R$.
Thus the subset objective is
$\mathbb E_R[F_{\mathcal I}(R)]$, and we can measure the quality
of the selected calibration set via
$\mathbb E_R[F(R)-F_{\mathcal I}(R)]$, which measures how well the selected activations represent
the full calibration objective in \eqref{eq:full-objective}
under a random change of basis.

Theorem~\ref{thm:haar-selection} gives a bound on this discrepancy. Specifically, let $q(\mathcal I)$ denote the probability that, along a random Gaussian direction, the maximum support value is attained by an activation that is not contained in the selected set $\mathcal I$. The theorem shows that this probability directly controls the expected difference between the full and reduced calibration objectives. Thus, when Algorithm~\ref{alg:directional-probes} selects activations that attain the directional maximum for most random Gaussian directions, the reduced  objective closely approximates \eqref{eq:full-objective} on average over random changes of basis, without requiring recovery of the entire convex hull.

\begin{theorem}[Expected Approximation Guarantee]
\label{thm:haar-selection}
Let $\mathcal A=\{x_1,\ldots,x_N\}\subset\mathbb R^d$, with $d\ge2$,
and identify activations that coincide up to sign.
For any fixed nonempty subset $\mathcal I\subseteq[N]$ returned by Algorithm \ref{alg:directional-probes}, define:
\[
q(\mathcal I)
=
\Pr_{g\sim\mathcal N(0,I_d)}
\left[
\operatorname*{arg\,max}_{i\in[N]}|g^{\mathsf T}x_i|
\notin\mathcal I
\right],
\]
Then, for $R\sim\operatorname{Haar}(\SO(d)):$
\begin{equation}
\mathbb E_R\!\left[F(R)-F_{\mathcal I}(R)\right]
\le
\left(\max_{i\in[N]}\|x_i\|_2\right)
\sqrt{3q(\mathcal I)}.
\label{eq:expected-selection-bound}
\end{equation}
\end{theorem}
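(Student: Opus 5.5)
The plan is to reduce the Haar expectation over the whole rotation to a single-direction expectation over the uniform sphere, and from there to the Gaussian law defining $q(\mathcal I)$.

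\textbf{Step 1: a per-row gap.} For $v\in\R^d$ set $D(v)=h_K(v)-h_{K_{\mathcal I}}(v)\ge 0$. Since $K_{\mathcal I}\subseteq K$, and a difference of maxima is at most the maximum of the differences,
\[
0\le F(R)-F_{\mathcal I}(R)\le \max_{1\le k\le d} D(r_k).
\]
Two facts about $D$ will be used:
\begin{itemize}
\item $D(v)=0$ whenever $\argmax_i|v^{\mathsf T}x_i|\in\mathcal I$. Here "identify activations up to sign" removes ties between $x$ and $-x$, and a.s.\ uniqueness of the argmax holds for a Gaussian (hence for a uniform) direction.
\item $D(v)\le |v^{\mathsf T}x^\star(v)|$, where $x^\star(v)$ is the maximizing activation.
\end{itemize}
Hence $D(v)\le \mathbf 1\{E(v)\}\,|v^{\mathsf T}x^\star(v)|$, where $E(v)$ is the event that the maximizer lies outside $\mathcal I$.

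\textbf{Step 2: replace the maximum over rows by a square-root sum.} Rather than a union bound, which would cost a factor $d$, I use $\max_k D(r_k)\le(\sum_k D(r_k)^2)^{1/2}$ and Jensen:
\[
\mathbb E_R\max_k D(r_k)\le\Bigl(\sum_{k=1}^d\mathbb E\,D(r_k)^2\Bigr)^{1/2}=\bigl(d\,\mathbb E_u D(u)^2\bigr)^{1/2}.
\]
The equality holds because under Haar measure each row $r_k$ is uniform on $S^{d-1}$ (this needs $d\ge2$). It therefore suffices to show
\[
\mathbb E_u\bigl[\mathbf 1\{E(u)\}(u^{\mathsf T}x^\star(u))^2\bigr]\le \frac{3q(\mathcal I)M^2}{d},\qquad M=\max_i\|x_i\|_2 .
\]
Multiplying by $d$ and taking square roots then gives \eqref{eq:expected-selection-bound}.

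\textbf{Step 3: pass to Gaussians.} Write $g=\|g\|u$ with $\|g\|$ independent of $u$. Both the argmax and the event $E$ are scale invariant, so $\Pr_u[E(u)]=q(\mathcal I)$. Moreover
\[
\mathbb E[(g^{\mathsf T}x^\star)^2\mathbf 1_E]=d\,\mathbb E[(u^{\mathsf T}x^\star)^2\mathbf 1_E].
\]
It therefore suffices to prove $\mathbb E_g[\mathbf 1_E (g^{\mathsf T}x^\star(g))^2]\le 3qM^2$. I will try Cauchy--Schwarz in the form $\sqrt{q}\,\bigl(\mathbb E (g^{\mathsf T}x^\star)^4\bigr)^{1/2}$. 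For a \emph{fixed} $x$ one has $\mathbb E(g^{\mathsf T}x)^4=3\|x\|^4$, which is where the constant $3$ should come from.

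\textbf{Main obstacle.} The difficulty is that $x^\star(g)$ depends on $g$, so the fourth-moment identity does not apply directly. A naive bound through $\max_i (g^{\mathsf T}x_i)^2$ would introduce a $\log N$ dependence and destroy the dimension- and size-free form. I would first try conditioning on the maximizer identity, i.e.\ decomposing over the cones $C_i=\{g: x^\star(g)=x_i\}$ for $i\notin\mathcal I$. On each cone I would use the radial/angular factorization so that the weight $(g^{\mathsf T}x_i)^2$ is controlled by $\|g\|^2$ times the angular part. I would then compare with the rotation-invariant second moment, which equals $\|x_i\|^2\le M^2$ per unit mass.

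If this only yields $q\cdot\mathrm{const}\cdot M^2$ with a different constant, I would adjust the Cauchy--Schwarz split, pairing $\mathbf 1_E$ with the fourth moment, to land exactly on $3q$. I expect this step, controlling the adaptively chosen $x^\star$ without paying for $N$, to be where the real work lies.
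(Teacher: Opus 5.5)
There is a genuine gap. Your Steps 1--2, and the radial rescaling at the start of Step 3, match the paper's reduction exactly: $0\le F(R)-F_{\mathcal I}(R)\le\max_k D(r_k)$, then $\max_k D(r_k)^2\le\sum_k D(r_k)^2$, uniform row marginals, and $\mathbb E_g[D(g)^2]=d\,\mathbb E_u[D(u)^2]$. Everything therefore hinges on showing $\mathbb E_g[D(g)^2]\le 3qM^2$, and the gap is in your next move.

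Bounding $D\le\mathbf 1_E\,h_K$ discards the subtracted term $h_{K_{\mathcal I}}$. On $E$ that term is typically nearly as large as $h_K$, and this cancellation is essential. The resulting target $\mathbb E_g[\mathbf 1_E\,(g^{\mathsf T}x^\star(g))^2]\le 3qM^2$ is false, so no cone decomposition or choice of Cauchy--Schwarz split can establish it. Take $x_i=e_i$ for $i\in[d]$ and $\mathcal I=[d-1]$. Then $M=1$ and $q=1/d$, and by exchangeability the left-hand side equals $\frac1d\,\mathbb E\max_{i\le d}g_i^2$. This grows like $\frac{2\log d}{d}$ and already exceeds $\frac3d$ at $d=10$, where $\mathbb E\max_{i\le 10}g_i^2\approx 3.8$. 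The $\log N$ loss you feared is therefore genuine, not an artifact of a naive bound. On each cone $C_i$ the projection $g^{\mathsf T}x_i$ is by construction the largest one, so its conditional second moment is not controlled by $\|x_i\|_2^2$. Separately, Cauchy--Schwarz against $\mathbf 1_E$ can only produce $\sqrt q$. Even for a fixed $x$ it gives $\sqrt{3q}\,M^2$, hence only $M(3q)^{1/4}$ at the end. The constant $3$ does not come from the Gaussian fourth moment.

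The missing idea is to use that the difference $D=h_K-h_{K_{\mathcal I}}$ is small, not that $h_K$ is. The paper's argument runs as follows.
\begin{itemize}
\item $D$ is even, nonnegative, piecewise linear and $2M$-Lipschitz. At every differentiability point where $D=0$ you have a global minimum, so $\nabla D=0$ there. Hence $\mathbb E\|\nabla D(g)\|_2^2\le 4M^2\Pr[D(g)>0]\le 4M^2q$.
\item $D-\mathbb E D$ is centered and even, so it has no Hermite component of degree $0$ or $1$. The Gaussian Poincar\'e inequality for such functions holds with constant $\frac12$, giving $\operatorname{Var}(D(g))\le\frac12\mathbb E\|\nabla D(g)\|_2^2\le 2M^2q$.
\item $D$ vanishes off $E$, so Cauchy--Schwarz gives $(\mathbb E D)^2\le q\,\mathbb E D^2$, that is, $\operatorname{Var}(D)\ge(1-q)\,\mathbb E D^2$.
\item Combining the two variance bounds, $\mathbb E_g[D(g)^2]\le 2M^2q/(1-q)$, which is at most $3M^2q$ when $q\le\frac13$.
\item When $q\ge\frac13$, the trivial bound $F(R)-F_{\mathcal I}(R)\le M$ (from $\|Rx_i\|_\infty\le\|x_i\|_2$) already suffices.
\end{itemize}
The constant $3$ comes from this case split.
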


The proof is given in Appendix~\ref{sec:thm_proof}.
The main idea is to relate the Gaussian directions used by
Algorithm~\ref{alg:directional-probes} to the coordinate directions
generated by a random rotation. If
$R\sim\operatorname{Haar}(\SO(d))$, then each row direction $r_k$
is marginally uniform on the unit sphere. Likewise, for
$g\sim\mathcal N(0,I_d)$, the normalized vector $g/\|g\|_2$
is uniform on the unit sphere. Thus, projecting an activation
onto a normalized Gaussian direction is distributionally
equivalent to an individual coordinate after a Haar
rotation. Although the rows of $R$ are dependent through
orthogonality, the proof bounds the objective gap by
a sum of coordinate errors, whose expectation depends
only on these marginal distributions.

\begin{wrapfigure}{r}{0.45\textwidth}
\vspace{-\intextsep}
\centering
\includegraphics[width=\linewidth]{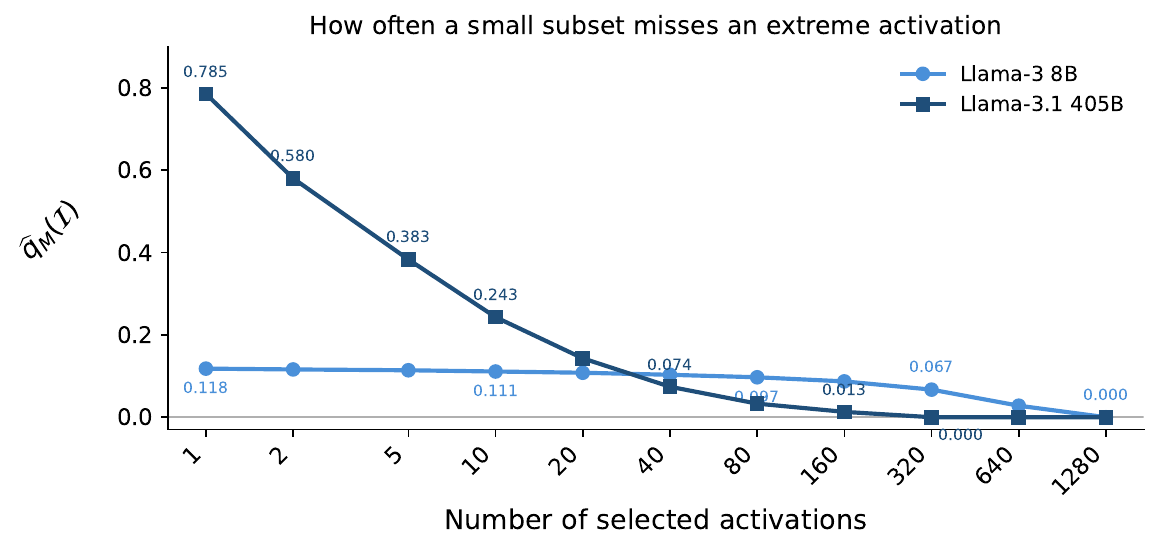}
\caption{\small
Empirical estimate $\widehat q_M(\mathcal I)$ of the miss probability
$q(\mathcal I)$ in Algorithm \ref{alg:directional-probes} as a function of the activation budget $B$.}
\label{fig:qhat}
\end{wrapfigure}
An appealing feature of Theorem~\ref{thm:haar-selection} is that the
quantity controlling the bound is directly estimated by
Algorithm~\ref{alg:directional-probes}. Indeed, for the selected set
$\mathcal I$, the counts $e_i$ gives the empirical estimate:
\[
\widehat q_M(\mathcal I)
=
1-\frac{1}{M}\sum_{i\in\mathcal I}e_i,
\]
which is precisely the fraction of sampled Gaussian directions whose
maximizing activation is not contained in $\mathcal I$. As
$M\rightarrow\infty$, these empirical frequencies converge to
their population probabilities, and hence
$\widehat q_M(\mathcal I)\rightarrow q(\mathcal I)$. We report $\widehat q_M(\mathcal I)$ for both Llama-3.1-405B and Llama-3-8B at $M=8192$. We see that the counts are highly
concentrated. Selecting a single activation already
reduces the empirical miss probability $\widehat q_M(\mathcal I)$ to $0.78$
and $0.12$, respectively, and it reaches zero on both models at $B=1280$ activations. For Llama-3-8B and Llama-3.1-405B,
this is already far less than $0.1\%$ of $\mathcal A$. Finally, given the selected index set $\mathcal I$ returned by Algorithm \ref{alg:directional-probes}, we collect the selected
activations as columns of the reduced calibration matrix:
\begin{equation}
    U
    =
    X_{:,\mathcal I}
    =
    [x_i]_{i\in\mathcal I}
    \in\R^{d\times B}.
    \label{eq:selected-calibration-matrix}
\end{equation}
We replace the calibration matrix $X$ with the selected
matrix $U=[x_i]_{i\in\mathcal I}\in\R^{d\times B}$, where
$|\mathcal I|=B$:
\begin{equation}
    \min_{R\in\SO(d)} F_U(R),
    \qquad
    F_U(R)
    =
    \|RU\|_\infty
    =
    \max_{i\in\mathcal I}\|Rx_i\|_\infty
    \label{eq:selected-objective}
\end{equation}
The calibration procedure thus retains activations that determine
the maximum across many directions, while reducing
the number of calibration vectors from $N$ to $B$. This also
restricts the span of the retained activations to dimension at most
$B$, which we discuss next.


\subsection{An exact reduction in the rotation optimization problem}
\label{sec:reduction}
Once $U$ is fixed, the objective in \eqref{eq:selected-objective}
evaluates $R$ only on the selected activations, all of which lie
in $\operatorname{span}(U)$. Any two rotations that agree on this
subspace therefore produce the same matrix $RU$ and attain the
same loss, regardless of how they act on its orthogonal complement.
Thus, rather than tracking the action of $R$ on all of
$\R^d$, it suffices to track its action on a basis for
$\operatorname{span}(U)$. To make this clear, take an exact QR
decomposition $U=Q_US$, where $r=\rank(U)\leq B$.
Here, $Q_U\in\R^{d\times r}$ has orthonormal columns spanning
$\operatorname{span}(U)$, and $S\in\R^{r\times B}$ contains the
coordinates of the selected activations in this basis.
The matrix $V=RQ_U$ records the images of these basis vectors
under $R$, and hence determines every rotated activation through
$RU=VS$. Moreover, orthogonality of $R$ implies
$V^{\mathsf T}V=Q_U^{\mathsf T}R^{\mathsf T}RQ_U=I_r$, so
$V\in\St(d,r):=\{V\in\R^{d\times r}:V^{\mathsf T}V=I_r\}$.
The following theorem proves the converse when $r<d$,
showing that every $V\in\St(d,r)$ can be written as $RQ_U$
for some $R\in\SO(d)$, so optimizing over $V$ is
equivalent to optimizing over the full rotation $R$.

\begin{theorem}[Exact selected-subspace reduction]
\label{thm:reduction}
Let $U=Q_US\in\R^{d\times B}$ have rank $1\leq r<d$, where
$Q_U\in\R^{d\times r}$, $S\in\R^{r\times B}$, and
$Q_U^{\mathsf T}Q_U=I_r$. Then
$\{RU:R\in\SO(d)\}=\{VS:V\in\St(d,r)\}$. Then for every
continuous function $\ell:\R^{d\times B}\to\R$:
\begin{equation}
    \min_{R\in\SO(d)}\ell(RU)
    =
    \min_{V\in\St(d,r)}\ell(VS).
    \label{eq:exact}
\end{equation}
Every feasible reduced solution lifts to a full rotation with
identical objective value.
\end{theorem}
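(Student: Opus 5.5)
The plan is to prove the set equality $\{RU:R\in\SO(d)\}=\{VS:V\in\St(d,r)\}$ first, and then obtain \eqref{eq:exact} from it together with compactness. The last sentence of the theorem will come out of the lifting construction used in the second inclusion.

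For the inclusion ``$\subseteq$'': given $R\in\SO(d)$, set $V=RQ_U$. Then $V^{\mathsf T}V=Q_U^{\mathsf T}R^{\mathsf T}RQ_U=I_r$, so $V\in\St(d,r)$, and $RU=RQ_US=VS$. This is immediate.

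For ``$\supseteq$'', fix $V\in\St(d,r)$; we construct $R\in\SO(d)$ with $RQ_U=V$.
\begin{itemize}
\item Complete $Q_U$ to an orthogonal matrix $[Q_U\;Q_\perp]\in O(d)$. Likewise complete $V$ to $[V\;V_\perp]\in O(d)$, for instance by Gram--Schmidt on the standard basis, possible since $r<d$.
\item Set $R_0=[V\;V_\perp][Q_U\;Q_\perp]^{\mathsf T}$. Then $R_0\in O(d)$ and $R_0Q_U=V$, since $Q_\perp^{\mathsf T}Q_U=0$.
\item If $\det R_0=-1$, the hypothesis $r<d$ is used: $V_\perp$ has at least one column, and negating one column of $V_\perp$ flips the determinant. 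Equivalently, replace $R_0$ by $R_0 H$ with $H=I-2q q^{\mathsf T}$ for a unit vector $q$ in the range of $Q_\perp$. This does not change the action on $\operatorname{span}(Q_U)$.
\end{itemize}
We thus obtain $R\in\SO(d)$ with $RU=RQ_US=VS$. This also gives the lifting statement: every reduced $V$ yields a full rotation with $\ell(RU)=\ell(VS)$.

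The only real obstacle is this determinant correction. It is where $r<d$ is essential: for $r=d$ the Stiefel manifold is $O(d)$, which has two components, so the statement would fail. The construction above handles it.

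Finally, for \eqref{eq:exact}:
\begin{itemize}
\item $\St(d,r)$ is closed and bounded in $\R^{d\times r}$, hence compact. $\SO(d)$ is compact as well.
\item The maps $R\mapsto \ell(RU)$ and $V\mapsto\ell(VS)$ are continuous, so both minima are attained.
\item Both minima are minima of the same continuous $\ell$ over the same set $\{RU\}=\{VS\}$, so they are equal.
\item Moreover, a minimizer $V^\star$ lifts through the construction above to a minimizer $R^\star$ with $R^\star Q_U=V^\star$.
\end{itemize}
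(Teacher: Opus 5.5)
Your proposal is correct and follows essentially the same route as the paper. It uses $V=RQ_U$ for the forward inclusion, and for the converse the completion $R=[V\ V_\perp][Q_U\ Q_\perp]^{\mathsf T}$ with a sign flip of one complement column to force $\det R=1$; your reflection $R_0(I-2qq^{\mathsf T})$ with $q$ in the range of $Q_\perp$ is exactly that flip. It then uses compactness together with equality of the feasible image sets to equate the attained minima.
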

\begin{proof}
The forward inclusion follows from $V=RQ_U$. Conversely, extend
$Q_U$ and $V$ to orthogonal bases $[Q_U\ Q_\perp]$ and
$[V\ V_\perp]$ and set
$R=[V\ V_\perp][Q_U\ Q_\perp]^{\mathsf T}$. Then $RQ_U=V$.
Because $r<d$, a sign change in one complement column can enforce
$\det R=1$ without changing $RQ_U$. Hence $RU=VS$. Since $\mathrm{SO}(d)$ and $\mathrm{St}(d,r)$ are compact and $\ell$ is continuous, both minima are attained; equality of the feasible image sets therefore implies \eqref{eq:exact}.
\end{proof}
\vspace{-2mm}

Theorem~\ref{thm:reduction} applies directly to the entrywise
$\ell_\infty$ objective in \eqref{eq:selected-objective}, since
$\ell(RU)=\|RU\|_\infty$ is continuous.
It also applies without modification to any continuous surrogate
objective.

\paragraph{Calibration loss.}
In general, directly optimizing the $\ell_\infty$ norm is not attractive, since it requires taking subgradients and can make convergence slow. For optimization, we thus use a fourth-power penalty as a smooth
surrogate for the entrywise $\ell_\infty$ objective.
Since $\|RU\|_\infty^4\leq\|RU\|_4^4$, this provides a smooth
upper bound on the fourth power of the maximum entry.
This choice is motivated by related kurtosis-based approaches
to activation outlier suppression
\citep{akhondzadeh2025kurtailkurtosisbasedllm}. Moreover, we center each rotated activation within the loss
because a common shift of all coordinates changes their absolute
magnitudes but leaves their range unchanged.
This is relevant to the asymmetric activation
quantization used in our experiments, where the quantization
grid need not be centered at zero.
We therefore penalize deviations from each rotated activation's
coordinate mean.
Specifically, let
$C_d=I_d-\frac{1}{d}\mathbf{1}_d\mathbf{1}_d^{\mathsf T}$,
where $\mathbf{1}_d\in\R^d$ is the all-ones vector. We therefore replace the objective in
\eqref{eq:selected-objective} with a surrogate in \eqref{eq:centered-reduced}.
Since the centered loss is still continuous,
Theorem~\ref{thm:reduction} gives:
\begin{equation}
    \min_{R\in\SO(d)}\|C_dRU\|_4^4
    =
    \min_{V\in\St(d,r)}\|C_dVS\|_4^4.
    \label{eq:centered-reduced}
\end{equation}

\subsection{Fast optimization and rotation recovery}
\label{sec:solver}

The reduced problem in \eqref{eq:centered-reduced} replaces
the full rotation with a thin matrix $V$, but the loss remains
coupled to the orthogonality constraint through $C_dVS$.
We use the Alternating Direction Method of Multipliers
(ADMM) \citep{boyd2011admm} to separate these components.
Specifically, introducing $W=VS$ and $Z=C_dW$, we can write the problem as: 
\begin{equation}
    \min_{V,W,Z}\ \|Z\|_4^4
    \qquad
    \text{subject to}\quad
    W=VS,\quad
    Z=C_dW,\quad
    V^{\mathsf T}V=I_r.
    \label{eq:admm-split}
\end{equation}
ADMM alternates between updates of $Z$, $W$, and $V$ -- each of which can be computed via closed-form updates (see Appendix \ref{app:updates}) --
followed by dual updates that track violations of the two
linear constraints.
The $Z$-update handles the fourth-power penalty through
independent proximal updates by solving a cubic polynomial per entry.
The $W$-update balances agreement with $VS$ and the centered
variable $Z$ through a closed-form least-squares step.
Since $C_d$ is a projector, this step requires only column
means rather than a dense matrix inverse.
Finally, the $V$-update enforces orthogonality by solving
an orthogonal Procrustes problem \citep{golub} using a thin $d\times r$ Singular Value Decomposition (SVD).
This update remains exact even when $S$ is not orthogonal,
because $\|VS\|_F^2=\|S\|_F^2$ on the Stiefel manifold.
All iterations operate on the reduced calibration matrices,
so no transformer forward or backward pass is needed during
optimization.
\vspace{-2mm}
\paragraph{Cost.}
Factoring $U$ costs $O(dB^2)$ once for $B\leq d$.
Each iteration costs $O(drB+dr^2)$ and stores
$O(dB+dr+rB)$ numbers.
When $r=B\ll d$, these become $O(dB^2)$ work and
$O(dB)$ storage, replacing a square orthogonality step
with $O(d^3)$ work and $O(d^2)$ rotation state.

After optimization, we recover $R$ using the completion
in the proof of Theorem~\ref{thm:reduction}, with a random
Hadamard rotation for the orthogonal complement.
Completion leaves $RU=VS$ unchanged, although different
completions can act differently on unseen activations
outside $\operatorname{span}(U)$.
\vspace{-3mm}
\section{Experiments}

\begin{table}[t]
  \centering
  \begin{minipage}[t]{0.57\linewidth}
  \vspace{0pt}
  \centering
  \fontsize{8.5pt}{10pt}\selectfont
  \setlength{\tabcolsep}{3.5pt}
  \renewcommand{\arraystretch}{1.08}
  \resizebox{\linewidth}{!}{%
  \begin{tabular}{@{}llrrr@{\hspace{1em}}rr@{}}
    \toprule
    & &
    \multicolumn{3}{c}{\textbf{Quantization performance}}
    &
    \multicolumn{2}{c}{\textbf{Rotation calibration cost}}
    \\
    \cmidrule(lr){3-5}
    \cmidrule(lr){6-7}
    \textbf{Model}
    & \textbf{Method}
    & \textbf{Wiki-2} $\downarrow$
    & \textbf{C4} $\downarrow$
    & \textbf{0-shot} $\uparrow$
    & \textbf{Runtime} $\downarrow$
    & \textbf{Peak GPU} $\downarrow$
    \\
    \midrule

    \cellcolor{white}
    & \texttt{Dense FP16}
    & 6.14 & 9.44 & 65.91
    & & 
    \\
    \rowcolor{darkgreen!10}
    \cellcolor{white}
    & \texttt{ThinQuant}
    & 7.91 & 12.83 & \textbf{59.70}
    & \textbf{1:56} & \textbf{18.5}
    \\
    \cellcolor{white}
    & \texttt{DartQuant}
    & 8.22 & 13.37 & 59.00
    & 19:03 & 37.1
    \\
    \cellcolor{white}\multirow{-4}{*}{\textbf{LLaMA-3 8B}}
    & \texttt{SpinQuant}
    & \textbf{7.67} & \textbf{12.69} & 59.62
    & 53:35 & 20.7
    \\
    \midrule

    \cellcolor{white}
    & \texttt{Dense FP16}
    & 2.86 & 7.17 & 72.70
    & &
    \\
    \rowcolor{darkgreen!10}
    \cellcolor{white}
    & \texttt{ThinQuant}
    & 5.63 & 10.34 & 65.83
    & \textbf{11:57} & \textbf{31.9}
    \\
    \cellcolor{white}
    & \texttt{DartQuant}
    & 7.55 & 15.69 & 58.64
    & 111:00 & 134.9
    \\
    \cellcolor{white}\multirow{-4}{*}{\textbf{LLaMA-3 70B}}
    & \texttt{SpinQuant}
    & \textbf{5.05} & \textbf{9.71} & \textbf{67.87}
    & 16.7h$^\ast$ & 239.0$^\ast$
    \\
    \midrule

    \cellcolor{white}
    & \texttt{Dense FP16}
    & 5.47 & 7.26 & 61.16
    & &
    \\
    \rowcolor{darkgreen!10}
    \cellcolor{white}
    & \texttt{ThinQuant}
    & 6.18 & \textbf{8.41} & \textbf{57.28}
    & \textbf{1:43} & 18.2
    \\
    \cellcolor{white}
    & \texttt{DartQuant}
    & 6.33 & 8.63 & 56.39
    & 20:25 & 75.1
    \\
    \cellcolor{white}\multirow{-4}{*}{\textbf{LLaMA-2 7B}}
    & \texttt{SpinQuant}
    & \textbf{6.12} & 8.49 & 56.98
    & 49:33 & \textbf{17.8}
    \\
    \midrule

    \cellcolor{white}
    & \texttt{Dense FP16}
    & 4.88 & 6.73 & 64.30
    & &
    \\
    \rowcolor{darkgreen!10}
    \cellcolor{white}
    & \texttt{ThinQuant}
    & 5.47 & 7.67 & 61.10
    & \textbf{2:51} & \textbf{21.9}
    \\
    \cellcolor{white}
    & \texttt{DartQuant}
    & 5.51 & 7.71 & 61.35
    & 31:42 & 51.1
    \\
    \cellcolor{white}\multirow{-4}{*}{\textbf{LLaMA-2 13B}}
    & \texttt{SpinQuant}
    & \textbf{5.35} & \textbf{7.61} & \textbf{61.73}
    & 113:32 & 31.5
    \\
    \midrule

    \cellcolor{white}
    & \texttt{Dense FP16}
    & 3.32 & 5.71 & 69.52
    & &
    \\
    \rowcolor{darkgreen!10}
    \cellcolor{white}
    & \texttt{ThinQuant}
    & 3.91 & 6.24 & 66.92
    & \textbf{11:42} & \textbf{33.2}
    \\
    \cellcolor{white}
    & \texttt{DartQuant}
    & 3.91 & 6.22 & 67.18
    & 99:57 & 73.1
    \\
    \cellcolor{white}\multirow{-4}{*}{\textbf{LLaMA-2 70B}}
    & \texttt{SpinQuant}
    & \textbf{3.85} & \textbf{6.19} & \textbf{67.62}
    & 17.2h$^\ast$ & 224.3$^\ast$
    \\
    \bottomrule
  \end{tabular}}
  \caption{
    \footnotesize W4A4KV4 quantization performance and rotation calibration cost.
    The 0-shot column averages nine downstream tasks.
    Dense FP16 denotes the unquantized reference model.
    Calibration-cost columns apply to the quantized methods.
    Runtime measures total rotation-calibration time for ThinQuant,
    DartQuant, and SpinQuant.
    GPU denotes peak GPU process memory during calibration, in GiB.
    Times are in \texttt{mm:ss} unless marked in hours.
    $^\ast$The 70B SpinQuant runs used 4 GPUs, so we report total GPU hours
    and total GPU memory summed across GPUs.
    Best quantized results are \textbf{bold}.
  }
  \label{tab:results}
  \end{minipage}%
  \hfill
  \begin{minipage}[t]{0.405\linewidth}
  \vspace{0pt}
  \centering
  \footnotesize
  \setlength{\tabcolsep}{2.5pt}
  \renewcommand{\arraystretch}{1.02}
  \resizebox{\linewidth}{!}{%
  \begin{tabular}{@{}lrr@{}}
    \toprule
    \textbf{Method}
    & \shortstack{\textbf{LLaMA-3}\\\textbf{70B}}
    & \shortstack{\textbf{LLaMA-3.1}\\\textbf{405B}}
    \\
    \midrule
    \texttt{Dense FP16}
    & 2.86 & 1.44
    \\
    \texttt{GPTQ} $+$ \texttt{QuaRot}
    & 6.04 & 5.82
    \\
    \texttt{GPTAQ} $+$ \texttt{QuaRot}
    & 5.81 & 3.48
    \\
    \rowcolor{darkgreen!10}
    \texttt{GPTQ} $+$ \texttt{ThinQuant}
    & 5.63 & 3.10
    \\
    \rowcolor{darkgreen!10}
    \texttt{GPTAQ} $+$ \texttt{ThinQuant}
    & \textbf{5.46} & \textbf{2.97}
    \\
    \bottomrule
  \end{tabular}}
  \caption{\footnotesize
   Scalability to extremely large architectures.
    WikiText-2 perplexity ($\downarrow$) on LLaMA-3-70B and LLaMA-3.1-405B. 
    The LLaMA-3-70B column is W4A4KV4 with symmetric weights and asymmetric activations.
    The LLaMA-3.1-405B column is W4A4.
    For the 405B comparison, ThinQuant uses the same asymmetric
    quantization as GPTAQ \cite{li2025gptaqefficientfinetuningfreequantization} for both weights and activations.
    The best quantized result in each column is \textbf{bold}. ThinQuant takes 2 hours 7 minutes for full rotation calibration on LLaMA-3.1-405B. We additionally run W4A4KV4 under GPTQ + ThinQuant for LLaMA-3.1-405B, and report the result in Figure \ref{fig:pareto}.
  }
  \label{tab:405b-scalability}
  \end{minipage}
\end{table}

\subsection{Experimental setup}
\label{sect:exp-setup}
\vspace{-0.5em}
\noindent\textbf{Models and baselines.}~~~
We evaluate ThinQuant on LLaMA-2 (7B, 13B, and 70B),
LLaMA-3 (8B and 70B)~\citep{dubey2024llama}. All models are evaluated under W4A4(KV4)
quantization, with weights, activations, and (when applicable) the KV
cache quantized to 4 bits. We use symmetric quantization for
weights and asymmetric quantization for activations and the
KV cache. As in SpinQuant and DartQuant, we use fixed random
Hadamard rotations for $R_3$ and $R_4$ (which are KV cache specific); these rotations are not
optimized and are enabled for every ThinQuant, DartQuant, and
SpinQuant evaluation.
Unless otherwise stated, weight quantization is performed using
GPTQ throughout all experiments, and
is applied after rotation calibration so that GPTQ operates on
the rotated weight matrices. We compare against
DartQuant and
SpinQuant under the same quantization settings. All experiments for all methods are conducted
on a single NVIDIA H200 GPU, with the exception of SpinQuant where we use 4 H200 GPUs and report total GPU hours.

\smallskip
\noindent\textbf{Evaluation.}~~~
We measure language-modeling performance using perplexity
($\downarrow$) on WikiText-2, C4, and Penn Treebank (PTB).
For downstream evaluation, we report the average zero-shot
accuracy ($\uparrow$) across nine tasks using the
LM Evaluation Harness~\citep{gao10256836framework}:
PIQA~\citep{bisk2020piqa},
ARC-Easy and ARC-Challenge~\citep{clark2018think},
HellaSwag~\citep{zellers2019hellaswag},
Winogrande~\citep{sakaguchi2021winogrande},
RTE~\citep{wang2018glue},
OpenBookQA~\citep{mihaylov2018openbookqa},
BoolQ~\citep{clark2019boolq}, and
Social IQa~\citep{sap2019socialiqacommonsensereasoningsocial}.

\smallskip
\noindent\textbf{Calibration.}~~~
Unless otherwise stated, ThinQuant uses $M=8192$ independent
Gaussian directions in
Algorithm~\ref{alg:directional-probes}.
For the shared residual rotation $R_1$, we set the total
activation budget to $B=2L+1$, where $L$ denotes the number
of transformer layers. The rotations are applied at the
down-projection and output-projection layers, so this budget
is sufficient to admit one activation vector from each
projection in every layer, though we select the top $B$ globally as in Algorithm \ref{alg:directional-probes}. For $R_2$, we set the budget to $H$
vectors, where $H$ is the number of transformer heads.
The selected activations are used
to construct the reduced optimization problem, which we
solve using the ADMM procedure in
Section~\ref{sec:solver}.
To assess computational efficiency, we report total rotation-calibration run-time and peak GPU memory in Table \ref{tab:results}, with a more detailed breakdown in Table \ref{tab:peak-disk}.
Memory is reported in GiB.

\paragraph{Results.}
Table~\ref{tab:results} shows that ThinQuant improves or closely matches the performance of DartQuant across all reported evaluation
metrics on LLaMA-2 and LLaMA-3 with
substantially less calibration time.
On LLaMA-3-70B, ThinQuant reduces WikiText-2 perplexity
from $7.55$ to $5.63$ and C4 perplexity from $15.69$ to
$10.34$, while reducing total rotation-calibration time
from $111$ minutes to under $12$ minutes. Moreover, Table~\ref{tab:peak-disk} in the appendix shows
that ThinQuant requires several orders of magnitude less
disk space than DartQuant. DartQuant requires storing large
activation tensors, whereas ThinQuant selects and retains
only a tiny subset of activations for rotation calibration.
For LLaMA-3-70B, this reduces the required disk space
from $960$\,GiB to $0.51$\,GiB.
Finally, Table~\ref{tab:405b-scalability} shows that ThinQuant scales to
LLaMA-3.1-405B, completing rotation calibration in just over 2 hours on a single H200 GPU. Under W4A4 quantization,
replacing QuaRot's fixed Hadamard rotations with ThinQuant
reduces WikiText-2 perplexity from $5.82$ to $3.10$ with GPTQ
and from $3.48$ to $2.97$ with GPTAQ, using the same
asymmetric quantization setting as \citet{li2025gptaqefficientfinetuningfreequantization}. We provide additional experiments on W4A4 including a comparison to OmniQuant in Table \ref{tab:results-w4a4}.

\section{Conclusions and Limitations}

ThinQuant combines geometric activation selection with an exact
subspace reduction to make rotation learning substantially cheaper.
ThinQuant
achieves competitive low-bit quantization performance compared to state-of-the-art methods with much lower
calibration runtime, GPU memory, and disk storage, making
rotation calibration of LLaMA-3.1-405B possible on a single H200 GPU. 

\textbf{Limitations.} We remark that we optimize rotations separately prior to using weight quantization algorithms like GPTQ, though the two could in principle be solved jointly. An interesting research direction is to design dedicated methods that can efficiently optimize rotation and weight quantization simultaneously. Moreover, some recent works have shown that learning rotation and diagonal scaling transforms jointly can improve quantization, though this can require expensive calibration \citep{hu2025ostquantrefininglargelanguage}. We leave scalable and efficient rotation and scaling optimization to future work.

\bibliography{references}
\bibliographystyle{plainnat}

\clearpage
\appendix

\section{Proof of Theorem~\ref{thm:haar-selection}}
\label{sec:thm_proof}

\begin{proof}
Throughout the proof, the activations and the subset
$\mathcal I$ are fixed. Write $q=q(\mathcal I)$ and
$H=\max_{i\in[N]}\|x_i\|_2$. Recall that:
\[
K=\conv\{\pm x_i:i\in[N]\},
\qquad
K_{\mathcal I}=\conv\{\pm x_i:i\in\mathcal I\}.
\]
For $u\in\mathbb R^d$, define:
\[
f(u)
=
h_K(u)-h_{K_{\mathcal I}}(u)
=
\max_{i\in[N]}|u^{\mathsf T}x_i|
-
\max_{i\in\mathcal I}|u^{\mathsf T}x_i|
\]
Since $\mathcal I\subseteq[N]$, we have $f(u)\ge0$.
The absolute values also imply that
$f(-u)=f(u)$ and $f(tu)=t f(u)$ for every $t\ge0$.
We also have
$0\le f(u)\le h_K(u)\le H\|u\|_2$,
so $f(g)$ is square-integrable for
$g\sim\mathcal N(0,I_d)$.

We first show that the expected gap in objectives
is bounded by:
\begin{equation}
\mathbb E_R\!\left[
\left(F(R)-F_{\mathcal I}(R)\right)^2
\right]
\le
\mathbb E_g[f(g)^2]
\label{eq:gaussian-haar-gap}
\end{equation}
Write the rows of $R$ as
$r_1^{\mathsf T},\ldots,r_d^{\mathsf T}$.
For every rotation,
\begin{align*}
F(R)
&=
\max_{1\le k\le d}
\left[h_{K_{\mathcal I}}(r_k)+f(r_k)\right]\\
&\le
\max_{1\le k\le d}h_{K_{\mathcal I}}(r_k)
+
\max_{1\le k\le d}f(r_k)\\
&=
F_{\mathcal I}(R)+\max_{1\le k\le d}f(r_k).
\end{align*}
Since $F_{\mathcal I}(R)\le F(R)$ and $f\ge0$, this gives:
\begin{equation}
0
\le
\left(F(R)-F_{\mathcal I}(R)\right)^2
\le
\max_{1\le k\le d}f(r_k)^2
\le
\sum_{k=1}^d f(r_k)^2
\label{eq:rowwise-gap-bound}
\end{equation}

Since $d\ge2$, if $R\sim\operatorname{Haar}(\SO(d))$,
each row $r_k$ is marginally uniform on the unit sphere
$\mathbb S^{d-1}$
\citep{Meckes_2019}.
Let $v$ be uniform on $\mathbb S^{d-1}$.
Taking expectations in \eqref{eq:rowwise-gap-bound} therefore
gives:
\begin{equation}
\mathbb E_R\!\left[
\left(F(R)-F_{\mathcal I}(R)\right)^2
\right]
\le
d\,\mathbb E_v[f(v)^2].
\label{eq:haar-sphere-gap}
\end{equation}

Now take $g\sim\mathcal N(0,I_d)$ and write
$\rho=\|g\|_2$ and $v=g/\|g\|_2$.
Then $v$ is uniform on the sphere and independent of $\rho$
\citep{vershynin2018high}.
Moreover,
\[
\mathbb E[\rho^2]
=
\mathbb E\|g\|_2^2
=
\sum_{j=1}^d\mathbb E[g_j^2]
=
d
\]
Since $f(tu)=t f(u)$ for every $t\ge0$ and $\rho$ and $v$
are independent:
\[
\mathbb E_g[f(g)^2]
=
\mathbb E_{\rho,v}[\rho^2 f(v)^2]
=
\mathbb E[\rho^2]\mathbb E_v[f(v)^2]
=
d\,\mathbb E_v[f(v)^2]
\]
Combining this with \eqref{eq:haar-sphere-gap}
proves \eqref{eq:gaussian-haar-gap}.

It remains to bound the
right-hand side of \eqref{eq:gaussian-haar-gap}.
We do this by establishing two variance bounds:
\[
(1-q)\mathbb E_g[f(g)^2]
\le
\operatorname{Var}_g(f(g))
\le
2H^2q.
\]
For $q<1$, the lower bound will allow us to bound the
second moment by $\operatorname{Var}_g(f(g))/(1-q)$.
The upper bound will then give an estimate
in terms of $H$ and $q$.

We first establish the lower bound by relating $q$ to the
probability that $f(g)$ is nonzero.
For distinct indices $i,j$, a tie
$|g^{\mathsf T}x_i|=|g^{\mathsf T}x_j|$
can occur only if:
\[
g^{\mathsf T}(x_i-x_j)=0
\qquad\text{or}\qquad
g^{\mathsf T}(x_i+x_j)=0.
\]
Both vectors $x_i-x_j$ and $x_i+x_j$ are nonzero by
the identification of activations up to sign.
Each equality therefore describes a proper hyperplane,
which has probability zero under the standard Gaussian.
There are only finitely many pairs of indices, so the
maximizing index is unique almost surely.
Whenever this unique maximizing index belongs to
$\mathcal I$, the two support values coincide and $f(g)=0$.
Whenever it does not belong to $\mathcal I$, every selected
activation has a strictly smaller absolute projection,
and hence $f(g)>0$.
It follows that:
\begin{equation}
\operatorname{Pr}[f(g)>0]
=
\operatorname{Pr}\left[i^{\star}(g)\notin\mathcal I\right]
=
q(\mathcal I).
\label{eq:gap-positive-probability}
\end{equation}

The function $f(g)$ is zero outside the event
$\{f(g)>0\}$, which has probability $q$ by
\eqref{eq:gap-positive-probability}.
Cauchy--Schwarz therefore gives:
\begin{align}
\left(\mathbb E_g[f(g)]\right)^2
&=
\left(
\mathbb E_g\!\left[
f(g)\mathbf 1_{\{f(g)>0\}}
\right]
\right)^2
\notag\\
&\le
\mathbb E_g[f(g)^2]\,
\mathbb E_g\!\left[
\mathbf 1_{\{f(g)>0\}}^2
\right]
\notag\\
&=
q\,\mathbb E_g[f(g)^2]
\label{eq:gaussian-gap-mean-square}
\end{align}
Using the definition of variance and
\eqref{eq:gaussian-gap-mean-square}, we obtain the required
lower bound:
\begin{align}
\operatorname{Var}_g(f(g))
&=
\mathbb E_g[f(g)^2]
-
\left(\mathbb E_g[f(g)]\right)^2
\notag\\
&\ge
(1-q)\mathbb E_g[f(g)^2].
\label{eq:gaussian-gap-variance-lower}
\end{align}

We now establish the upper bound on the variance.
We use the strengthened Gaussian Poincar\'e inequality
for functions orthogonal to constants and linear functions
\citep[]{cordero2004bconjecture}.
Evenness ensures orthogonality to linear functions, but
does not imply a zero mean. To satisfy both conditions,
we apply the inequality to the function:
\[
\widetilde f(u)=f(u)-\mathbb E_g[f(g)]
\]
By construction, $\mathbb E_g[\widetilde f(g)]=0$.
Moreover, subtracting a constant preserves evenness, so
symmetry gives:
\[
\mathbb E_g[g_j\widetilde f(g)]=0,
\qquad j=1,\ldots,d.
\]
Thus, centering removes the constant component, while
evenness eliminates the linear component. Subtracting a constant also leaves the gradient unchanged,
$\nabla\widetilde f=\nabla f$ almost everywhere.
The strengthened inequality therefore gives:
\begin{equation}
\begin{aligned}
\operatorname{Var}_g(f(g))
&=
\mathbb E_g[\widetilde f(g)^2]\\
&\le
\frac12\mathbb E_g\|\nabla\widetilde f(g)\|_2^2
=
\frac12\mathbb E_g\|\nabla f(g)\|_2^2.
\end{aligned}
\label{eq:even-gaussian-poincare}
\end{equation}

To bound the right-hand side of
\eqref{eq:even-gaussian-poincare}, we next control the
gradient of $f$. We first show that $f$ is $2H$-Lipschitz,
which implies $\|\nabla f(u)\|_2\le2H$ wherever $f$ is
differentiable. Since $f=h_K-h_{K_{\mathcal I}}$, it suffices
to show that each support function is $H$-Lipschitz.
Indeed, for any $u,v\in\mathbb R^d$,
\begin{align*}
|h_K(u)-h_K(v)|
&\le
\max_{i\in[N]}
\left|
|u^{\mathsf T}x_i|-|v^{\mathsf T}x_i|
\right|\\
&\le
\max_{i\in[N]}|(u-v)^{\mathsf T}x_i|\\
&\le
H\|u-v\|_2.
\end{align*}
The same argument applies to $h_{K_{\mathcal I}}$.
By the triangle inequality, their difference therefore satisfies:
\begin{equation}
|f(u)-f(v)|
\le
2H\|u-v\|_2
\label{eq:gap-lipschitz}
\end{equation}

Both support functions are maxima of finitely many linear
functions. Their difference $f$ is therefore piecewise
linear and differentiable almost everywhere.
At any differentiability point $u$ satisfying $f(u)=0$,
the nonnegativity of $f$ implies that $u$ is a global
minimum. Hence $\nabla f(u)=0$.
At every other differentiability point,
\eqref{eq:gap-lipschitz} implies
$\|\nabla f(u)\|_2\le2H$.
Thus,
\[
\|\nabla f(u)\|_2^2
\le
4H^2\mathbf 1_{\{f(u)>0\}}
\]
Since the Gaussian measure is absolutely continuous,
\eqref{eq:gap-positive-probability} gives:
\begin{equation}
\mathbb E_g\|\nabla f(g)\|_2^2
\le
4H^2q
\label{eq:gaussian-gradient-bound}
\end{equation}
Applying \eqref{eq:even-gaussian-poincare} and then
\eqref{eq:gaussian-gradient-bound}, we obtain the required
upper bound:
\begin{equation}
\operatorname{Var}_g(f(g))
\le
2H^2q
\label{eq:gaussian-gap-variance}
\end{equation}

We now use these two variance bounds to control the
second moment required by \eqref{eq:gaussian-haar-gap}.
Suppose first that $q<1$.
Rearranging the lower bound
\eqref{eq:gaussian-gap-variance-lower} gives the first
inequality below, and substituting the upper bound
\eqref{eq:gaussian-gap-variance} gives the second:
\begin{equation}
\mathbb E_g[f(g)^2]
\le
\frac{\operatorname{Var}_g(f(g))}{1-q}
\le
\frac{2H^2q}{1-q}
\label{eq:gaussian-gap-second-moment}
\end{equation}
Substituting \eqref{eq:gaussian-gap-second-moment} into
\eqref{eq:gaussian-haar-gap} therefore yields:
\begin{equation}
\mathbb E_R\!\left[
\left(F(R)-F_{\mathcal I}(R)\right)^2
\right]
\le
\mathbb E_g[f(g)^2]
\le
\frac{2H^2q}{1-q}
\label{eq:rotation-gap-second-moment}
\end{equation}
Cauchy--Schwarz, followed by
\eqref{eq:rotation-gap-second-moment}, then gives:
\begin{equation}
\mathbb E_R[F(R)-F_{\mathcal I}(R)]
\le
\left(
\mathbb E_R\!\left[
\left(F(R)-F_{\mathcal I}(R)\right)^2
\right]
\right)^{1/2}
\le
H\sqrt{\frac{2q}{1-q}}
\label{eq:expected-gap-intermediate}
\end{equation}

For every $R\in\SO(d)$ and every activation,
\[
\|Rx_i\|_\infty
\le
\|Rx_i\|_2
=
\|x_i\|_2
\le
H
\]
Thus:
\begin{equation}
0\le F(R)-F_{\mathcal I}(R)\le H
\label{eq:trivial-gap-bound}
\end{equation}
If $0\le q\le1/3$, then $1-q\ge2/3$, and hence:
\[
\frac{2q}{1-q}\le3q.
\]
Equation~\eqref{eq:expected-gap-intermediate} therefore gives:
\[
\mathbb E_R[F(R)-F_{\mathcal I}(R)]
\le
H\sqrt{3q}.
\]
If $1/3\le q\le1$, then $\sqrt{3q}\ge1$, so
\eqref{eq:trivial-gap-bound} gives:
\[
\mathbb E_R[F(R)-F_{\mathcal I}(R)]
\le
H
\le
H\sqrt{3q}.
\]
Substituting $q=q(\mathcal I)$ proves
\eqref{eq:expected-selection-bound}.
\end{proof}

\section{Reduced ADMM updates}
\label{app:updates}

Let $C=C_d$ and let $\Lambda,\Gamma$ be scaled dual variables
for $VS-W=0$ and $CW-Z=0$. With $V\in\St(d,r)$, we write the
augmented Lagrangian of \eqref{eq:admm-split} as:
\begin{equation}
\begin{aligned}
    \mathcal{L}_{\rho}
    ={}& \|Z\|_4^4
    +\frac{\rho}{2}\|VS-W+\Lambda\|_F^2
    +\frac{\rho}{2}\|CW-Z+\Gamma\|_F^2
    -\frac{\rho}{2}
    \bigl(\|\Lambda\|_F^2+\|\Gamma\|_F^2\bigr)
\end{aligned}
\label{eq:augmented-lagrangian}
\end{equation}
Holding $\rho$ fixed within each sweep, we update
\begin{align}
    Z^+&=\operatorname{prox}_{\|\cdot\|_4^4/\rho}(CW+\Gamma),
    \label{eq:prox}\\
    W^+&=(I_d-\tfrac12 C)
    \bigl(VS+\Lambda+C(Z^+-\Gamma)\bigr),
    \label{eq:W}\\
    V^+&=PQ^{\mathsf T},
    \quad P\Sigma Q^{\mathsf T}
    =\operatorname{svd}_{\mathrm{thin}}
    \bigl((W^+-\Lambda)S^{\mathsf T}\bigr),
    \label{eq:V}\\
    \Lambda^+&=\Lambda+V^+S-W^+,
    \qquad \Gamma^+=\Gamma+CW^+-Z^+.
    \label{eq:dual}
\end{align}
For an input entry $a$, the proximal value is the unique real
solution of $4z^3+\rho(z-a)=0$.
The $W$-update uses $(I_d+C)^{-1}=I_d-C/2$,
while the $V$-update is an orthogonal Procrustes problem because
$\|VS\|_F^2=\|S\|_F^2$ on $\St(d,r)$.

\paragraph{Penalty selection.}
For $R_1$, we initialize
$\rho_0=10\|CU\|_F^2/(dB)$, ten times the mean squared entry
of the centered activation matrix.
We then adapt $\rho$ every $20$ iterations by primal--dual residual
balancing \citep{boyd2011admm}.
Let:
\begin{align*}
    r_{\mathrm{p}}
    &=\max\bigl(\|W^+-V^+S\|_F,\ \|CW^+-Z^+\|_F\bigr),\\
    r_{\mathrm{d}}
    &=\rho\max\bigl(\|W^+-W\|_F,\ \|Z^+-Z\|_F\bigr).
\end{align*}
At each adjustment step, set:
\[
    \rho^+=
    \begin{cases}
        1.25\,\rho,
        & r_{\mathrm{p}}>10\,r_{\mathrm{d}},\\
        \rho/1.25,
        & r_{\mathrm{d}}>10\,r_{\mathrm{p}},\\
        \rho,
        & \text{otherwise},
    \end{cases}
\]
and clip the result to $[10^{-8},\max(10\rho_0,10^{-3})]$. The local $R_2$ problems are generally well behaved in our
experiments, so each $R_2$ solve uses a fixed $\rho=10$.

\paragraph{Iteration counts and stopping.}
Each solve alternates one $R_1$ problem with $R_{2, \ell}$ problem per layer.
$R_1$ runs for at most $1000$ iterations and for at least $300$.
After iteration $300$ it stops when both the primal and dual residuals are below $10^{-4}$:
\[
    \max\bigl(\|W-VS\|_F,\ \|CW-Z\|_F\bigr)<10^{-4},
    \qquad
    \rho\max\bigl(\|W^+-W\|_F,\ \|Z^+-Z\|_F\bigr)<10^{-4}.
\]
Each $R_2$ solve runs for $300$ iterations.
In every case the returned factor is the feasible iterate with the smallest objective value.

\section{Ablations and Additional Experiments}

\paragraph{Activation selection.}
Table~\ref{tab:selection-ablation} compares four selection rules
on LLaMA-3-70B under W4A4KV4 quantization, keeping the activation
budget and subsequent rotation optimization fixed.
Algorithm~\ref{alg:directional-probes} achieves a WikiText-2
perplexity of $5.63$, compared with $6.94$ when selecting the
largest $\ell_\infty$ activations after the initial Hadamard
rotation and $7.00$ when selecting them in the original basis.
Uniformly sampling the same number of activations performs
substantially worse, with a perplexity of $29.51$.
These results show that the choice of calibration activations
is important at such a small budget.

\paragraph{Global and local rotations.}
The optimization ablation in Table~\ref{tab:selection-ablation}
relates to the contribution of learning the global residual
rotation $R_1$ and the local rotations $R_2$.
Optimizing only $R_1$, while leaving $R_2$ at initialization,
achieves a perplexity of $5.69$, close to the $5.63$ obtained
when both are optimized.
Optimizing only $R_2$, while keeping $R_1$ at its initial
Hadamard rotation, gives a perplexity of $6.01$.
Thus, learning the global residual basis is sufficient to
approach the performance of the full method in this setting,
while also optimizing the local rotations provides a further
improvement.

\begin{table}[t]
  \centering
  \small
  \setlength{\tabcolsep}{4pt}
  \renewcommand{\arraystretch}{1.08}
  \begin{tabular}{@{}lrrrrrrr@{}}
    \toprule
    & \multicolumn{4}{c}{\textbf{Selection}}
    & \multicolumn{3}{c}{\textbf{Optimization}}
    \\
    \cmidrule(lr){2-5}
    \cmidrule(lr){6-8}
    & \shortstack{Algorithm \ref{alg:directional-probes}}
    & \shortstack{$\operatorname{Top}_B$ $(\|R_0 x\|_\infty)$}
    & \shortstack{$\operatorname{Top}_B$ $(\|x\|_\infty)$}
    & \shortstack{Random $B$}
    & $R_1,R_2$
    & \shortstack{$R_1$ only}
    & \shortstack{$R_2$ only}
    \\
    \midrule
    \textbf{Wiki-2 ppl} $\downarrow$
    & \cellcolor{darkgreen!10}\textbf{5.63}
    & 6.94
    & 7.00
    & 29.51
    & \cellcolor{darkgreen!10}\textbf{5.63}
    & 5.69
    & 6.01
    \\
    \bottomrule
  \end{tabular}
  \caption{
    LLaMA-3 70B W4A4KV4 ablations.
    \textbf{Selection}: all four columns optimize the same ThinQuant rotation and differ only in how the $B$ tokens are chosen.
    The first column is the Gaussian direction procedure of Algorithm~\ref{alg:directional-probes}.
    $\operatorname{Top}_B(\|R_0 x\|_\infty)$ scores the tokens after the initial Hadamard $R_0$.
    $\operatorname{Top}_B(\|x\|_\infty)$ keeps the tokens of largest residual $\ell_\infty$ norm in the original basis.
    Random samples the same budget uniformly.
    \textbf{Optimization}: we compare keeping only the global rotation ($R_1$) or only the local rotatons ($R_2$). 
    $R_1,R_2$ is the full Llama-3-70B run.
    $R_1$ only optimizes the residual rotation and leaves $R_2$ at initialization.
    $R_2$ only optimizes this rotation and leaves $R_1$ at the initial Hadamard.
    Best results in each block are \textbf{bold}.
  }
  \label{tab:selection-ablation}
\end{table}

\paragraph{Calibration cost breakdown.}
Table~\ref{tab:peak-disk} separates rotation optimization from
the total calibration procedure and reports the associated
disk storage.
On LLaMA-3-70B, ThinQuant reduces optimization time from
$74$ minutes $46$ seconds to $1$ minute $22$ seconds,
approximately a $55\times$ speedup over DartQuant.
The total calibration time decreases from $111$ minutes to 11:57 minutes, approximately a $10\times$ speedup.
The smaller reduction in total time reflects the cost of
activation collection, selection, and data preparation, which
together account for most of ThinQuant's calibration time.
Across all five models, ThinQuant also reduces disk storage
by more than three orders of magnitude.
For example, LLaMA-3-70B requires $0.51$\,GiB rather than
$960$\,GiB.
These savings arise from not needing to store the large
calibration set used by DartQuant.

\begin{table}[t]
  \centering
  \small
  \setlength{\tabcolsep}{3.5pt}
  \renewcommand{\arraystretch}{1.08}

  \begin{tabular}{@{}llrrrrr@{}}
    \toprule
    \textbf{Cost}
    & \textbf{Method}
    & \shortstack{\textbf{LLaMA-3}\\\textbf{8B}}
    & \shortstack{\textbf{LLaMA-3}\\\textbf{70B}}
    & \shortstack{\textbf{LLaMA-2}\\\textbf{7B}}
    & \shortstack{\textbf{LLaMA-2}\\\textbf{13B}}
    & \shortstack{\textbf{LLaMA-2}\\\textbf{70B}}
    \\
    \midrule

    \rowcolor{darkgreen!10}
    \cellcolor{white}\multirow{2}{*}{Optimization}
    & \texttt{ThinQuant}
    & \textbf{23.4s}
    & \textbf{1:22}
    & \textbf{22.0s}
    & \textbf{34.5s}
    & \textbf{1:31}
    \\
    & \texttt{DartQuant}
    & 12:14
    & 74:46
    & 12:05
    & 21:22
    & 60:47
    \\
    \midrule

    \rowcolor{darkgreen!10}
    \cellcolor{white}\multirow{2}{*}{Total calibration}
    & \texttt{ThinQuant}
    & \textbf{1:56}
    & \textbf{11:57}
    & \textbf{1:43}
    & \textbf{2:51}
    & \textbf{11:42}
    \\
    & \texttt{DartQuant}
    & 19:03
    & 111:00
    & 20:25
    & 31:42
    & 99:57
    \\
    \midrule

    \rowcolor{darkgreen!10}
    \cellcolor{white}\multirow{2}{*}{Disk storage}
    & \texttt{ThinQuant}
    & \textbf{0.13}
    & \textbf{0.51}
    & \textbf{0.13}
    & \textbf{0.20}
    & \textbf{0.51}
    \\
    & \texttt{DartQuant}
    & 192
    & 960
    & 193
    & 300
    & 961
    \\
    \bottomrule
  \end{tabular}

  \caption{
    Breakdown of rotation-calibration cost.
    Optimization measures rotation optimization only, while total
    calibration additionally includes activation collection, selection,
    training-data preparation, and folding the rotation into the weights.
    Times are in \texttt{mm:ss} unless marked in seconds.
    Disk storage is in GiB.
    Lower values are better, and best results are \textbf{bold}.
  }
  \label{tab:peak-disk}
\end{table}

\begin{table}[t]
  \centering
  \small
  \setlength{\tabcolsep}{3.5pt}
  \renewcommand{\arraystretch}{1.08}

  \begin{tabular}{@{}llrrr@{\hspace{1em}}rr@{}}
    \toprule
    & &
    \multicolumn{3}{c}{\textbf{Quantization performance}}
    &
    \multicolumn{2}{c}{\textbf{Calibration cost}}
    \\
    \cmidrule(lr){3-5}
    \cmidrule(lr){6-7}
    \textbf{Model}
    & \textbf{Method}
    & \textbf{Wiki-2} $\downarrow$
    & \textbf{C4} $\downarrow$
    & \textbf{0-shot} $\uparrow$
    & \textbf{Runtime} $\downarrow$
    & \textbf{Peak GPU} $\downarrow$
    \\
    \midrule

    \rowcolor{darkgreen!10}
    \cellcolor{white}\multirow{3}{*}{\textbf{LLaMA-3 8B}}
    & \texttt{ThinQuant}
    & \textbf{7.75} & \textbf{12.55} & \textbf{60.87}
    & \textbf{1:56} & 18.5
    \\
    & \texttt{DartQuant}
    & 8.03 & 13.09 & 59.98
    & 19:03 & 37.1
    \\
    & \texttt{OmniQuant}
    & 71.32 & 92.86 & 32.00
    & 54:57 & \textbf{17.2}
    \\
    \midrule

    \rowcolor{darkgreen!10}
    \cellcolor{white}\multirow{3}{*}{\textbf{LLaMA-3 70B}}
    & \texttt{ThinQuant}
    & \textbf{5.52} & \textbf{10.14} & \textbf{66.18}
    & \textbf{11:57} & \textbf{31.9}
    \\
    & \texttt{DartQuant}
    & 7.15 & 14.88 & 59.05
    & 111:00 & 134.9
    \\
    & \texttt{OmniQuant}
    & 482.73 & 775.47 & --
    & 406:49 & 127.0
    \\
    \midrule

    \rowcolor{darkgreen!10}
    \cellcolor{white}\multirow{3}{*}{\textbf{LLaMA-2 7B}}
    & \texttt{ThinQuant}
    & \textbf{6.12} & \textbf{8.32} & \textbf{57.58}
    & \textbf{1:43} & 18.2
    \\
    & \texttt{DartQuant}
    & 6.25 & 8.52 & 56.93
    & 20:25 & 75.1
    \\
    & \texttt{OmniQuant}
    & 11.21 & 16.05 & 45.68
    & 53:59 & \textbf{16.2}
    \\
    \midrule

    \rowcolor{darkgreen!10}
    \cellcolor{white}\multirow{3}{*}{\textbf{LLaMA-2 13B}}
    & \texttt{ThinQuant}
    & \textbf{5.43} & \textbf{7.59} & 61.25
    & \textbf{2:51} & \textbf{21.9}
    \\
    & \texttt{DartQuant}
    & 5.46 & 7.63 & \textbf{61.65}
    & 31:42 & 51.1
    \\
    & \texttt{OmniQuant}
    & 10.64 & 16.62 & 44.49
    & 93:07 & 27.3
    \\
    \midrule

    \rowcolor{darkgreen!10}
    \cellcolor{white}\multirow{3}{*}{\textbf{LLaMA-2 70B}}
    & \texttt{ThinQuant}
    & \textbf{3.88} & 6.20 & 67.01
    & \textbf{11:42} & \textbf{33.2}
    \\
    & \texttt{DartQuant}
    & 3.89 & \textbf{6.19} & \textbf{67.31}
    & 99:57 & 73.1
    \\
    & \texttt{OmniQuant}
    & 8.43 & 12.19 & 49.76
    & 404:08 & 125.9
    \\
    \bottomrule
  \end{tabular}

  \caption{
    W4A4 quantization performance and calibration cost.
    Weights and activations are 4-bit; the KV cache is left at 16 bits.
    The 0-shot column averages nine downstream tasks.
    For ThinQuant and DartQuant, runtime is total rotation-calibration time,
    including activation collection, selection, and training-data preparation.
    Both reuse the rotations from the W4A4KV4 runs, so those calibration costs
    match Table~\ref{tab:results}.
    OmniQuant runtime is activation-scale collection plus LET/LWC fitting.
    GPU denotes peak GPU process memory during calibration, in GiB.
    Times are in \texttt{mm:ss}.
    Best results are \textbf{bold}.}
  \label{tab:results-w4a4}
\end{table}

\end{document}